\documentclass[dvipsnames,format=sigconf,nonacm]{acmart}

\usepackage{amsmath}
\usepackage{amsthm}
\usepackage{amscd}
\usepackage{amsfonts}
\usepackage{graphicx}
\usepackage{color}
\usepackage{algorithm}
\usepackage{algorithmicx}
\usepackage{algpseudocode}
\PassOptionsToPackage{colorlinks=true,linkcolor=blue,anchorcolor=blue,citecolor=blue,urlcolor=blue}{hyperref}

\usepackage[font=bf]{caption} % see https://tex.stackexchange.com/questions/86603/subcaption-compatibility-with-acm-document-class for ACM style consistency
% \DeclareCaptionType{copyrightbox}
\usepackage{subcaption}

\newtheorem*{remark}{Remark}

\newcommand{\ie}{{\em i.e.,} }
\newcommand{\eg}{{\em e.g.,} }

\newcommand{\Cs}{\mathcal C}

\newcommand{\D}{\mathcal D}
\newcommand{\Z}{\mathbb Z}
\newcommand{\Ss}{\mathcal S}

\newcommand{\ol}[1]{\overline{#1}}
\newcommand{\defequals}{\; \stackrel{\text{def}}{=} \;} 
\renewcommand{\phi}{ \emptyset }
\newcommand{\prob}[1]{\mbox{Prob}\left\{ #1 \right\}}

\author{Matthew Roughan}
\orcid{0000-0002-7882-7329}
\affiliation{
  \institution{School of Computer and Mathematical Sciences, University of Adelaide}
  \country{Australia}
}
\email{matthew.roughan@adelaide.edu.au}

\begin{document} 
\label{firstpage}

\def\lemmaautorefname{Lemma}
\def\algorithmautorefname{Algorithm} 
\def\sectionautorefname{Section \!\!}
\def\subsectionautorefname{Section \!\!}
% from http://tex.stackexchange.com/questions/52410/how-to-use-the-command-autoref-to-implement-the-same-effect-when-use-the-comman
\def\equationautorefname~#1\null{%
  (#1)\null
}

\setcounter{page}{1}

\title{Evolutionary Generation of Random Surreal Numbers for Benchmarking}
\keywords{Evolutionary computing, benchmarks, surreal distribution}

\begin{abstract}
There are many areas of scientific endeavour where large, complex datasets are needed for benchmarking. Evolutionary computing provides a means towards creating such sets. As a case study, we consider Conway's Surreal numbers. They have largely been treated as a theoretical construct, with little effort towards empirical study, at least in part because of the difficulty of working with all but the smallest numbers. To advance this status, we need efficient algorithms, and in order to develop such we need benchmark data sets of surreal numbers. In this paper, we present a method for generating ensembles of random surreal numbers to benchmark algorithms. The approach uses an evolutionary algorithm to create the benchmark datasets where we can analyse and control features of the resulting test sets. Ultimately, the process is designed to generate networks with defined properties, and we expect this to be useful for other types of network data. 
\end{abstract} 
 
\maketitle

\section{Introduction}
 
\newlength{\figurewidth}
\setlength{\figurewidth}{0.85\columnwidth} 

There are many areas of scientific endeavor where large, complex datasets are needed for benchmarking. Evolutionary computing has already provided a means towards creating these for correctness testing~\cite{jones96:_autom,pargas99:_test,adeniyi16:_improv_genet_algor_based_test}. We propose that a similar approach is useful for generating benchmarks to test performance. 

For instance, performance testing of network protocols and anomaly
detection algorithms have always required large, complex datasets for
benchmarking and acquiring these from real data has major challenges
around privacy, symmetry, and the large scale of the data
required~\cite{roughan2008}.  Another application needing large,
complex, synthetic data is medical research~\cite{giuffre13:_harnes},
where there is a growing number of commercial companies creating
synthetic data.

Here we apply the idea of using evolutionary computing to generate controlled, synthetic data for benchmarking algorithms in a theoretical context--the surreal numbers--where we have a simple theoretical problem, but complex, recursive calculations, whose computational complexity is difficult to analyse. 

Conway's surreal numbers~\cite{conway2000numbers,knuth1974surreal} are an unconventional construction for conventional numbers (including the naturals, rationals, reals and ordinals). However, almost all the literature on surreal numbers views them as a purely theoretical construction.
Missing from the literature are algorithms to efficiently compute results, for instance even standard numeric functions such as {\tt floor}.  Algorithms provide means to test new ideas on surreal numbers, but they must be effective. Hence, we need to have a benchmark set of surreal numbers. 
 
Typical related works to create tests for software aim to use genetic algorithms to optimise aspects such as code coverage~\cite{jones96:_autom,pargas99:_test,adeniyi16:_improv_genet_algor_based_test}. However, our goal is not just testing correctness, but also benchmarking alternative algorithms. Although there is a long history of using genetic algorithms to generate test sets the idea of generating performance tests seems much more recent~\cite{wilde20:_evolut}, however, even in \cite{wilde20:_evolut} they seek to find special `hard' or `easy' cases, rather than simply benchmark performance. For our benchmarks to be valid, we don't wish to steer the results to particular cases -- we want to test algorithms in the `wild.'

However, we also need test sets of surreal numbers to discover and
validate new hypotheses about the surreal numbers such as those
associated with {\em birthday
  arithmetic}~\cite{simons17:_meet_the_surreal_number,ROUGHAN_s2023}.
So here we approach the problem slightly differently from a
traditional genetic algorithm in that genetic material is structurally
used to create new surreals explicitly, \ie there is no indirection
through a representation, and no explicit fitness is used to guide the
algorithm.

Nevertheless, we must retain some control over the complexity of the dataset. It must be complicated enough to stress algorithms, but surreal algorithms are challengingly recursive, so the test set could easily be too complex. And the required complexity is likely to be different for different algorithms, so we need to be able to analyze and predict properties of the resulting data.

This paper presents a method using an evolutionary approach for
generating random surreal numbers that we can mathematically analyse
and control, and hence use to benchmark algorithms.  For instance, we
can derive the {\em generation} distribution for these ensembles,
which can be used as a measure of their complexity, but which is also
of interest
itself~\cite{simons17:_meet_the_surreal_number,ROUGHAN_s2023}.

One can usefully think of surreal numbers as Directed Acyclic Graphs (DAGs) (with a small set of additional properties) and as such the method presented here is also a method for generating an ensemble of random DAGs, with control over size and density. 

In summary, the contributions of this paper are: 
\begin{enumerate}
  
  \item A means to generate a random ensemble of surreal numbers with
    controlled complexity (describe in \autoref{alg:main}). We provide
    the code and a set of examples as an extension to the package
    \url{https://github.com/mroughan/SurrealNumbers.jl}. 
 
  \item An analysis of the resulting ensemble showing (i) a form of
    weak stationarity, (ii) a distribution of elements that have a
    controllable distribution of complexity, and (ii) and
    understanding of how integers arise in the distribution. 
  
  \item An apparently new univariate, discrete, two-parameter
    stochastic distribution that we christen the {\em surreal
      distribution}. 

\end{enumerate}

The results here are not peculiar to surreal numbers. Evolutionary computing such as shown here provides an ideal means towards generating network data. Such are needed for many uses, for instance, Waxman~\cite{b.m.waxman88:_graph}, created his eponymous model for random graphs as a side note in a paper on designing new network protocols. However, models such as his omit structure in favour of statistical similarity. The approach used here allows the creation of structural constraints--in this case that the graph is a DAG--while matching statistical properties. Evolutionary algorithms are ideal for incorporating constraints into the model-building process. 

\section{Preliminaries and Related Work} 

There are several good tutorials or books on surreal numbers, \eg
\cite{conway2000numbers,knuth1974surreal,tondering13:_surreal_number_introd,grimm12:_introd_surreal_number,simons17:_meet_the_surreal_number}. Here we will present a minimal description
in order to provide a clear context for this work. 

A surreal number $x$ is defined in terms of its left and right sets $X_L$ and $X_R$, which are sets of surreal numbers (or empty). A valid surreal number requires that there are no elements $x_R \in X_R$ that are $\leq$ any element $x_L \in X_L$. We start by defining $\bar{0} = \{ \phi \mid \phi \},$ and from there construct all other surreals recursively.

We denote surreal numbers in lower case, and sets in upper case, with the
convention that $X_L$ and $X_R$ are the left and right sets of $x$,
and we write a surreal {\em form} as 
\[ x \defequals \{ X_L \mid X_R \},\]
where no element of $X_L$ is $\geq$ any element of $X_R$. 
% Similarly, we would assume that $Y_L$ and $Y_R$ are the left and right sets of $y$, and so on.

The literature on surreal
numbers interweaves the numbers with their {\em forms}.  This is best explained by an
analogy to rational numbers. We can think of any given rational number
in terms of its \emph{form}, \eg $p/q$, but $2 p / 2q = p / q$, \ie there
are two forms with the same \emph{value}. In fact, there are an
infinite number of rational number forms with any given value.  We
usually refer to these as the same ``number.''  But the interest of this paper is primarily the forms because that is what an algorithm must work with.

In mathematical terms, any set of surreal numbers is actually a {\em setoid}, \ie a set equipped with an equivalence relation (here equality in value). It is common to reduce setoids\footnote{In fact when we consider the class of surreals we are dealing with a {\em braided ring setoid} because the surreals are equipped with standard arithmetic operations that are preserved under the equivalence relations.} to their {\em quotient set}, \ie a set of elements that is unique under the equivalence, by collapsing the equivalence classes down to a single element. Here we wish to maintain the subtlety of equivalence versus identity.
% https://ncatlab.org/nlab/show/setoid

Hence, will use 
Keddie's conventions\footnote{Conway defines similar terms~\cite{conway2000numbers}[p.15] but uses different notation.}~\cite{keddie94:_ordin_operat_surreal_number} and say
\begin{itemize}
    \item two surreal forms are {\em identical} if they have identical left and right sets, and we denote this by $==$; 

    \item two surreal forms are {\em equal} (equivalent) if they have the same value, and we denote this by $\equiv$; and
    
    \item we use $\defequals$ for definitions.
    
\end{itemize}
We reserve single equals signs for real numbers. 

% The operations from traditional arithmetic: comparisons, addition, and multiplication are all defined for surreal forms~\cite{conway2000numbers,simons17:_meet_the_surreal_number,tondering13:_surreal_number_introd}, as are relations such as $\leq$. Division is more complicated and we do not use it here.

% As with rational numbers the concept of a form that is ``least terms" exists here -- we call it a {\em canonical} form, as defined next. 

\subsection{Dyadic numbers and canonical forms}
 
The left and right sets of a surreal form can be infinite, and this 
leads to many interesting facets of the surreal numbers (the ordinals,  
for instance), but here we will only consider surreal numbers with
finite representations, the class of which we denote $\Ss$. These are the so-called \emph{short} numbers \cite[Def
2.24]{schleicher2005introduction}. The restriction to short surreals may seem limiting, but we are only restricting ourselves to surreal forms that can be generated by a finite stochastic process. 

\begin{figure}[!t] 
  \centering
    \includegraphics[width=0.16\textwidth]{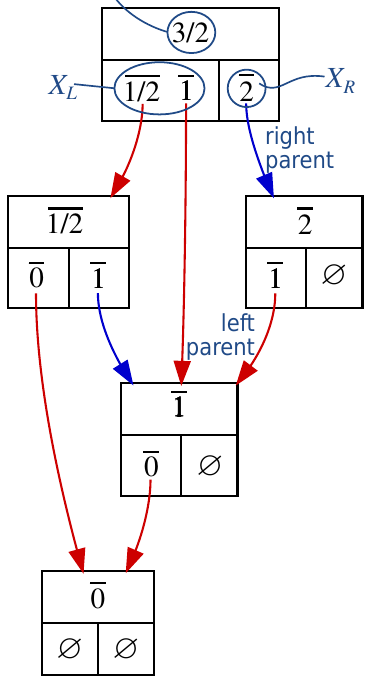}
  \caption{A DAG depicting a surreal form of 3/2. Boxes represent a
    surreal number (value in the top section, and the
    left and right sets shown in the bottom sections),
    with left and right parents shown by red and blue arrows.} 
    \label{fig:x0}
    \vspace*{-5mm} 
\end{figure}

These surreals are also an important and useful subset because they
correspond to the \emph{dyadic} numbers $\D$, which are rational
numbers of the form $n / 2^k$, where $n$ and $k$ are integers~\cite[p.27]{simons17:_meet_the_surreal_number}. 
The standard mapping from dyadic rational to surreal number is the Dali
function $d: \D \rightarrow \Ss$,~\cite{tondering13:_surreal_number_introd}. 
Defined
recursively, the Dali construction provides a simple and unique
\emph{canonical} form for each equivalence class of surreal forms. We denote canonical forms by putting a line above the number,
\eg
\[ \overline{1} \defequals d(1) \defequals \{
\overline{0} \mid \phi\} == \big\{ \{ \phi \mid \phi\} \mid \phi
\big\} 
.\]
Returning to the analogy of rational numbers, the canonical form is
similar to a rational number expressed in least terms, \ie the
form $p/q$ where $p$ and $q$ are integers without any (non-trivial) common factors. We denote the set of canonical forms by $\Cs \subset \Ss$. 
% Note that by construction the elements of $\Cs$ have unique values. 

The Dali construction erects a scaffolding for surreals in terms of sets, but it is easier to visualise them as (labelled) Directed Acyclic Graphs (DAGs), \eg see~\autoref{fig:x0}. The figure shows a
surreal as a node in a connected graph with left and right
sets shown separately. The Dali construction creates only one surreal number form for each value, and these are the simplest, canonical forms, but there are an infinite number of others. For instance, \autoref{fig:x0} shows the DAG of a non-canonical form with value $3/2$.

Not all DAGs represent a surreal number form. Every surreal form's DAG has a single root (the node representing the number itself) and single sink-node, which is always the canonical origin $\ol{0}$.

\begin{definition}
    We refer to the elements of the left and right sets of a surreal $x$ as its left- and right-parents and we denote them by $P_L(x)$ and $P_R(x)$. We refer to the union of left and right sets as the parents and denote it $P(x) = P_L(x) \cup P_R(x)$. 
\end{definition}

Other authors use terms such as left and right {\em options}, from the
game-theory roots of surreal numbers~\cite{conway2000numbers}. The notion of parents is appealing, however, because it leads to a useful descriptor of a surreal form: its {\em generation} or {\em birthday}~\cite{knuth1974surreal}. 

\begin{definition}
    The {\em generation} $g(x)$ of a surreal $x$ is one more than that of the largest generation parent, \ie $g(\bar{0}) = 0,$ and
    \[ g(x) = 1 + \max_{p \in P(x)} g(p). \]
\end{definition}

\begin{remark}  The only surreal form with no parents is $\bar{0} 
  \defequals \{ \phi \mid \phi \}$ and this is also the only surreal form from generation 0.
  \end{remark}

We say a surreal $x$ is older than $y$ if it
comes from an earlier generation or has an earlier birthday, \ie $g(x)
< g(y).$ Older surreals are also called {\em simpler} by
Conway and others~\cite{conway2000numbers,simons17:_meet_the_surreal_number}. 

\begin{remark}  The rationale for equating lower generation with simplicity comes from several directions: it is true that the lowest generation form will have minimal representation size, but also the definition links to the defining the value of a surreal and other aspects of these numbers. Moreover, the generation determines the maximum depth of recursion required for computations on the surreal, and hence it is a key metric when considering the computational complexity of calculations using the surreal. Hence, this idea of simplicity is not an imposed ideal, it is a natural definition.
\end{remark}

\medskip

\section{The Synthesis Process}

To test algorithms we need to be able to set benchmark problems that are neither trivial nor impossibly hard. The standard Dali construction creates a so-called dyadic tree, but these are, by definition, the very simplest surreal forms. We need control over complexity. And although we could dream up any number of processes to create random surreals, we must be wary. Simple seeming approaches--for instance, use of arithmetic operations--can result in very, very complicated outputs~\cite{ROUGHAN2019293}.  
We show here how we can obtain that through an evolutionary algorithm.

Our process mimics a genetic algorithm in that it maintains a population of
surreal forms from which we select parents for a new population. The
algorithm also has some analogies with Markov Chain Monte Carlo
(MCMC) sampling methods. 

The term {\em generation} is already in use for surreal numbers, so we call each
iteration of a population a {\em clade}\footnote{We are somewhat abusing the
conventional definition of clade from biology, in which it would refer to a
group with a common ancestor -- here it is a group with a common
ancestral population.}.

The algorithm is detailed in \autoref{alg:main}, with its input parameters listed below: 

\begin{enumerate}
    \item The initial maximum generation $g^{(0)}_{max}$;

    \item The population size $n$, of each clade; 

    \item The number of iterations $m$, \ie the number of clades created; 

    \item The distribution $D_p$ of the number of parents selected for each new surreal;  

    \item A weighting function $w(g)$; and

    \item The distribution $D_s$ of the split point between the left and
      right parents. 
    
\end{enumerate}

\begin{algorithm}
\caption{Surreal number ensemble creation algorithm.}
\label{alg:main}
\begin{algorithmic}
\State Initialise the clade $\Cs_0$ with all canonical surreal forms with generation no more than $g_0^{max}$. 
\For{$i=1$ to $m$}
    \Comment{create a new clade $\Cs_i$}
    \For{$j=1$ to $n$}
       \State Generate a number of parents $n_p \sim D_p$
       \State Select $n_p$ surreals from $\mathcal{C}_{i-1}$ by the weighting $w(g(x))$.
       \State Sort the parents into an ordered list $P$
       \State Choose a split point $s \sim D_s(n_p)$
           \Comment{ $s \in [0,1,\ldots, n_p]$}
       \If{$s > 0$}
           \State $X_L = \{ P[1, \ldots, s] \}$ 
       \Else
           \State $X_L = \phi $ 
       \EndIf
       \If{$s \leq n_p$}
           \State $X_R = \{ P[s+1, \ldots, n_p] \}$ 
       \Else
           \State $X_R = \phi $ 
       \EndIf
       \State $x_{ij} \gets \{ X_L \mid X_R \}$
    \EndFor
\EndFor
\end{algorithmic}
\end{algorithm}

This is not strictly a genetic algorithm because (i) there is no `fitness' criteria being applied\footnote{One might think of the weight function as a fitness-derived selection criteria, but it is more mathematically tractable (as we show later) to use a simple weight and derive the ensemble properties than to approach this as fitness.}, and (ii) the parents here are not
being used for their `genetic material', they are parents in the
sense in which surreal numbers are constructed. They are therefore
present as subgraphs in the new surreal forms. However, it represents an evolutionary approach, building one clade from the previous to attain a controlled and stable level of complexity. 

We use the simplest possible choices for the distributions: 
\begin{enumerate}
    \item The number of parents needs to be a non-negative, univariate, discrete distribution, hence we use $D_p = Poisson(\lambda)$.

    \item Given $n_p = |P|$ parents, the split point distribution is over the
      integers from $0$ to $n_p$ and should be symmetric about
      $n_p/2$. Tested here are (i) $D_s(n_p) = U(0,n_p),$ (the Uniform
      discrete distribution) and (ii) $D_s = Bin(n_p,1/2),$ (the
      Binomial distribution on $n_p$ trials with probability 1/2).  
     
\end{enumerate}

We could stop the process at any iteration, and obtain a viable ensemble for use in testing, but it is more appealing to find processes that converge towards a stationary distribution.
However, it is not at all obvious that the process above converges. For instance, we note
that the generation of a surreal form is one more than that of its
youngest parent, and hence we might expect the maximum generation of
each clade to drift upwards. We show that with suitable weighting this force can be counteracted.

The initial population (of canonicals) is biased towards its largest generations because there are $2^k$ canonical surreal forms from generation $k$. Hence, in order to converge to a stable distribution we reduce this bias using the weighting function $w(\cdot)$ (as the generation distribution is discrete we will write $w_k$) to reduce the selection of larger generations, \ie we choose $w_k$ to be a decreasing function of $k$ (we will be more specific in the following section). 

With such a weighting function, we can show that this process has a type of weak-sense convergence to stationarity. That is, certain properties of the population converge. It is not the conventional second-order weak-sense convergence, but there is a valid analogy. We will explain convergence in the following section, but first we present some small implementation details. 
 
%%%%%%%%%%%%%%%%%%%%%%%%%%%%%%%%%%%%%%%%%%%%%%%%%%%%%%%%%%%%%%%%%%%%%%%%5
\section{The Surreal Distribution}

A surreal form's generation (or birthday) is a key indicator of
its complexity (see earlier discussion). Hence, a key goal here is to
control the evolution of the ensemble in such a way that (i) the process converges in some sense, and (ii) we understand the resulting generation distribution. 

Here we take $g_k$ to be the proportion of surreal forms in a clade $\mathcal{C}$ with generation $k$, and 
\( G_k = \sum_{i=0}^k g_i, \)
so that $g_k$ and $G_k$ are the empirical probability-mass function (PMF) and 
cumulative distribution function (CDF), respectively.

The generation of a surreal depends on those of its parents, but not on whether they are left or right parents, and so we can ignore the splitting function for the moment. 

The number of parents selected comes from the Poisson distribution with parameter $\lambda$ (the mean of the distribution). That is, the number of parents $n_p = |P|$ is distributed as 
\begin{equation}
   \prob{n_p = k} = e^{-\lambda} \frac{\lambda^{k}}{k!},
\end{equation}
for $k=0,1,2, \ldots$.

\subsection{Generation 0}

The first and easiest question is what proportion of each clade will come from generation 0. 

\begin{lemma}
  \label{lem:g0}
Given the process described in \autoref{alg:main}, the expected proportion of surreal forms with generation number 0 is
\begin{equation}
  g_0 =  e^{-\lambda} . \label{eqn:gen_0}  
\end{equation}
\end{lemma}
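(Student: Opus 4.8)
The plan is to reduce the claim to a single observation about the Poisson parameter $\lambda$. The key fact is the correspondence between generation-0 forms and the event that no parents are selected: by the Remark following the definition of generation, the unique surreal form with no parents is $\bar{0} \defequals \{\phi \mid \phi\}$, and this is also the unique form of generation $0$. In \autoref{alg:main} a newly constructed form $x_{ij}$ is assigned empty left and right sets precisely when the number of selected parents is $n_p = 0$, in which case $x_{ij} == \bar{0}$. Hence $g(x_{ij}) = 0$ if and only if $n_p = 0$.

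First I would compute the probability of this event directly from the Poisson PMF. Since $n_p \sim Poisson(\lambda)$,
\[ \prob{n_p = 0} = e^{-\lambda} \frac{\lambda^0}{0!} = e^{-\lambda}. \]
Each of the $n$ forms in a clade is built from an independent draw of $n_p$, so each independently has generation $0$ with this probability. By linearity of expectation, the expected number of generation-$0$ forms in the clade is $n\, e^{-\lambda}$, and dividing by the clade size $n$ gives the expected proportion $g_0 = e^{-\lambda}$.

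Because the statement follows immediately from the parentless characterisation of $\bar{0}$, there is no real obstacle in the calculation itself. The only point requiring care is to confirm that whether a new form lands in generation $0$ depends on the draw of $n_p$ alone, and not on which parents are drawn from $\Cs_{i-1}$: the weighting $w(g(x))$, the splitting distribution $D_s$, and the composition of the previous clade all affect the outcome only when $n_p \ge 1$, and so play no role here. This also explains why $g_0$ is independent of the clade index $i$ and of the initial maximum generation $g_0^{max}$, which usefully foreshadows the stationarity arguments to come.
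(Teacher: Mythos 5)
Your proof is correct and takes essentially the same route as the paper: both rest on the observation that $\bar{0}$ is the unique parentless form and the unique generation-0 form, so generation 0 occurs exactly when $n_p = 0$, with Poisson probability $e^{-\lambda}$. Your added remarks on linearity of expectation and independence from the weighting and splitting choices are fine elaborations of what the paper leaves implicit.
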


\begin{proof}
 The only surreal from generation zero is 
 $\ol{0} = \{ \phi \mid \phi \}$, and this is the only surreal with zero parents. So the proportion of a clade from generation 0 (denoted by $g_0$) will be given by the probability of selecting zero parents, \ie
 \(
  g_0 = \prob{n_p = 0 } = e^{-\lambda} . 
 \)
\end{proof}

\subsection{Later generations}

\begin{theorem}
  \label{thm:gk}
  Given the process described in \autoref{alg:main}, and given a weighting $w_k$ and existing clade with generation distribution $g_k$, then the generation distribution of a new clade created from this existing clade will have CDF
  \begin{equation}
     G'_{k+1} = e^{\lambda (Z_k-1)},
  \end{equation}
  where we define weighted PMF $z_k = g_k w_k / Z$, with normalizing constant $Z = \sum_{i=0}^\infty g_i w_i$, and $Z_k = \sum_{i=0}^k z_k$
  % \[ Z_k = \frac{\sum_{i=0}^k g_i w_i}{ Z}, \]
  for $k \geq 0$. 
  % (and for consistency we set $Z_{-1}=0$).
\end{theorem}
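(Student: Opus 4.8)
The plan is to compute the CDF of the generation of a single newly created surreal $x$, since the expected proportion of the new clade with generation at most $k+1$ is exactly $\prob{g(x) \le k+1}$. The key observation is the generation recursion $g(x) = 1 + \max_{p \in P(x)} g(p)$: a new surreal has generation at most $k+1$ precisely when every one of its selected parents has generation at most $k$. The sole exception is the parentless case, which produces $\ol{0}$ of generation $0$; since $0 \le k+1$ for all $k \ge 0$, this case is automatically included, and it occurs with probability $\prob{n_p = 0}$, in agreement with \autoref{lem:g0}.

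First I would pin down the generation distribution of a single selected parent. Because parents are drawn from the existing clade with selection weighted by $w$, the chance that a given drawn parent has generation $j$ is the weighted proportion $z_j = g_j w_j / Z$, so the probability that a single parent has generation at most $k$ is the weighted CDF $Z_k = \sum_{i=0}^k z_i$. Conditioning on the number of parents $n_p$ and treating the $n_p$ draws as independent, the probability that all of them have generation at most $k$ is $Z_k^{\,n_p}$.

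Then I would average over $n_p \sim Poisson(\lambda)$. Writing $G'_{k+1} = \sum_{n=0}^\infty \prob{n_p = n}\, Z_k^{\,n}$, the sum is the probability generating function of the Poisson distribution evaluated at $Z_k$, namely $\E{Z_k^{\,n_p}} = e^{\lambda(Z_k - 1)}$, which is the claimed result. A quick consistency check: when the old clade places all its weighted mass at generation at most $k$ we have $Z_k = 1$ and $G'_{k+1} = 1$, as it must.

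The main obstacle is justifying the independence used in the step $\prob{\text{all parents have gen} \le k \mid n_p} = Z_k^{\,n_p}$. Selection is without replacement from a finite clade, so strictly the parents are weakly dependent and each is distributed according to the empirical weighted PMF rather than an i.i.d.\ law. I would handle this by working in the large-clade limit (or equivalently in expectation over the ensemble), where the without-replacement correlations vanish and the per-draw distribution is exactly $z_j$; this is also the natural setting for the ``expected proportion'' phrasing inherited from \autoref{lem:g0}. The remaining manipulations --- the weighted normalisation by $Z$ and the Poisson PGF --- are routine.
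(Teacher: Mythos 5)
Your proposal is correct and follows essentially the same route as the paper: per-parent generation drawn from the weighted PMF $z_k$, the maximum of $n_p$ draws having CDF $Z_k^{n_p}$, and averaging over the Poisson number of parents via its generating function to obtain $e^{\lambda(Z_k-1)}$, with the $n_p=0$ case handled by $\ol{0}$. Your explicit treatment of the independence issue (selection without replacement from a finite clade, resolved in the large-clade limit) is a point the paper's proof passes over silently, but it does not change the argument's structure.
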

\begin{proof}
Given clade $\mathcal{C}$ of size $n$, we select individual $x$ with probability proportional to $w\big(g(x)\big) / n$.  If we have a proportion $g_k$ of clade $\mathcal{C}$ from generation $k$, then the probability of a parent $a$ being chosen from generation $k$ is 
\( \prob{ g(a) = k }
  = g_k w_k / Z = z_k.
\)

Given a set of discrete RVs $\{ X_i \}_{i=1}^n$, with Cumulative Distribution Function (CDF) $F_X(\cdot)$, then the CDF of their maximum will be 
%\[ F_{X_{max}}(j) = F_X(j)^n. \]
$F_X(j)^n.$ 
So the maximum generation of a set of $n_p$ parents $P$ selected as above will have conditional distribution function 
\[ \prob{ \max_{x \in P} g(x) \leq k \; \Bigg| \; |P|=n_p } =Z_k^{n_p},
\]
for $k \geq 0$ and $n_p \geq 0$, where the case $n_p=0$ arises as in Lemma~\ref{lem:g0} because in this case the only possible surreal is $\ol{0}$.

The new clade of surreals $x'$ with $n_p$ sampled parents $P$ will have generation one more than its youngest parent, hence % for $k \geq 0$ and $n_p \geq 0$
\begin{equation}
  \prob{ g(x') \leq k+1 \; \Bigg| \; |P| = n_p }  = Z_k^{n_p}.
    \label{eqn:g_to_z}
\end{equation} 
Summing \autoref{eqn:g_to_z} over the Poisson-distributed parents we see
\begin{equation}
 G'_{k+1}
   % & = & \sum_{n_p=0}^{\infty} \prob{ g(x) \leq k+1 \Bigg| n_p }
   %        e^{-\lambda} \frac{\lambda^{n_p}}{n_p!}\\
    =  e^{-\lambda} 
        \sum_{n_p=0}^{\infty}  \frac{(\lambda Z_k)^{n_p}}{n_p!}
   =     e^{\lambda (Z_k-1)}.
\end{equation}
Thus, we have a closed-form expression for the generation distribution of a new clade, given the distribution of the prior clade. \end{proof}

\subsection{The weighting function}

It is easy to compute the formula above numerically, but it is not trivial to use it as we wish. The flexibility of the weighting function makes it hard to  answer questions such as: 
\begin{itemize}

    \item For what weighting functions $w(\cdot)$, if any, does this
      process converge, such that the generation distribution of the
      pre- and post-clade populations are eventually the same? 

    \item Does the initial clade influence the long-term behaviour?

    % \item Is the resulting distribution unique?

\end{itemize}
In order to answer these questions we restrict our possible weighting
functions as follows: 
for a given generation distribution $g_i$ in the existing clade, we set the weighting function to be $w_i=0$ where $g_i=0$ and where-ever $g_i>0$ we set 
\( w_i = f_i / g_i , \)
for some function $f_i$ independent of $g_i$. That is, we remove the influence of the current prevalence of a particular generation by sampling in inverse proportion to that prevalence. 

Then $Z_k$ doesn't depend on the previous clade (except where it is zero, which we consider below), and (given the maximum generation of the existing population is $g_{max}$) it is given by
\begin{equation}
  Z_k = \frac{\sum_{i=0}^k f_i}{\sum_{i=0}^{g_{max}} f_i},
  \label{eq:z_f}
\end{equation}
for all $k \leq g_{max}.$ 

Here we use the function
\( f_k = \alpha^k, \)
for $\alpha \in (0,1)$. The choice makes some sense -- we suspect that the tail of the generation distributions to be geometric\footnote{We later show that this is the case for this
weighting.} and 
matching this is a simple choice. We leave exploration of other possibilities for future work. Then

\begin{theorem}
  \label{thm:surreal_dist}
  Given the process described in \autoref{alg:main} and given a weighting $w_k = f_i / g_i$ where $g_i$ is the generation distribution of an existing clade and $f_i = \alpha^k$, for $\alpha \in (0,1)$ then the long-term generation distribution of the clades will have PMF 
  \begin{equation}
    \label{eqn:pmf}
   \prob{ g(x) = k }  
      =   \left\{ \begin{array}{ll}  
              e^{-\lambda},  & \mbox{ for } k=0, \\
              e^{-\lambda \alpha^{k}}  - e^{-\lambda \alpha^{k-1}}, & \mbox{ for } k \geq 1. \\
              \end{array}
              \right.
  \end{equation}
\end{theorem}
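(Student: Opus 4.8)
The plan is to exploit the recursion $G'_{k+1} = e^{\lambda(Z_k - 1)}$ from \autoref{thm:gk} and look for a fixed point of the clade-to-clade map. The reason this is tractable is the adaptive weighting: substituting $w_i = f_i/g_i$ into $z_k = g_k w_k / Z$ cancels the factor $g_k$, so that $z_k = f_k/\sum_i f_i$ and the normalised partial sum $Z_k$ depends on the incoming clade only through its support, not through the shape of its generation distribution --- precisely the content of \eqref{eq:z_f}. So rather than iterate a complicated map, I would substitute a candidate distribution, compute $Z_k$, and verify that the output of the recursion reproduces the candidate.

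First I would evaluate the geometric sums in \eqref{eq:z_f} for $f_k = \alpha^k$. Using $\sum_{i=0}^k \alpha^i = (1-\alpha^{k+1})/(1-\alpha)$ gives
\[ Z_k = \frac{1-\alpha^{k+1}}{1-\alpha^{g_{max}+1}}, \]
for $k \le g_{max}$. Since $\alpha \in (0,1)$, as the support grows ($g_{max} \to \infty$) the denominator tends to $1$ and $Z_k \to 1 - \alpha^{k+1}$. Feeding this limiting $Z_k$ back into the recursion yields
\[ G_{k+1} = e^{\lambda\left((1-\alpha^{k+1})-1\right)} = e^{-\lambda\alpha^{k+1}}, \]
that is $G_k = e^{-\lambda\alpha^k}$ for $k \ge 1$. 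I would then check the boundary against \autoref{lem:g0}: since $G_0 = g_0 = e^{-\lambda} = e^{-\lambda\alpha^0}$, the single closed form $G_k = e^{-\lambda\alpha^k}$ in fact holds for all $k \ge 0$, confirming this is a genuine fixed point of the map. Differencing the CDF then gives the PMF, $g_k = G_k - G_{k-1} = e^{-\lambda\alpha^k} - e^{-\lambda\alpha^{k-1}}$ for $k \ge 1$, which is exactly \eqref{eqn:pmf}; a quick sanity check confirms each $g_k > 0$ (because $\alpha^k < \alpha^{k-1}$) and that $G_k \to 1$, so the limit is a bona fide probability distribution.

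The hard part will be turning this fixed-point verification into a genuine convergence statement, since the theorem asserts the \emph{long-term} distribution rather than merely self-consistency. Here I would argue two things. First, that $g_{max}$ grows without bound: each clade can realise generation $g_{max}+1$ whenever a parent from the current top generation is selected, so $g_{max}$ increases by at most one per iteration but does increase, giving $g_{max}\to\infty$ as the number of clades $m\to\infty$. Second, that the finite-support correction is negligible in the limit: the only dependence of $Z_k$ on the incoming clade is through the denominator $1-\alpha^{g_{max}+1}$, which converges geometrically to $1$. Combining these with the shape-independence granted by the adaptive weighting, each new clade differs from the stationary form $e^{-\lambda\alpha^k}$ by a term that decays geometrically in $g_{max}$, so the iterates converge to \eqref{eqn:pmf}. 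Making the ``differs by a geometrically small term'' estimate precise --- and separating the deterministic (expected-proportion) map analysed here from the sampling fluctuations of a finite clade of size $n$ --- is where the real care is needed.
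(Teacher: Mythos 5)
Your core computation is exactly the paper's: both you and the paper feed the weighting cancellation $z_k = f_k/\sum_i f_i$ through the recursion of \autoref{thm:gk}, evaluate the geometric sums from \autoref{eq:z_f} to get $Z_k = 1-\alpha^{k+1}$, plug in to obtain the CDF $e^{-\lambda\alpha^k}$, anchor the $k=0$ case with \autoref{lem:g0}, and difference to recover \autoref{eqn:pmf} (with the same positivity and normalisation checks). Where you genuinely diverge is the treatment of ``long-term.'' The paper assumes the incoming clade has $g_k>0$ for \emph{all} $k$ (so effectively $g_{max}=\infty$ and the one-step output is already the surreal distribution, independent of the incoming shape), and then disposes of clades with missing generations via an informal Markov-chain-ergodicity analogy: every generation state is reachable, and re-insertions of $\ol{0}$ destroy periodicity (e.g.\ the all-odd-parents/all-even-children obstruction). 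You instead keep the support finite, retain the correction factor $1-\alpha^{g_{max}+1}$ in $Z_k$, and argue convergence by showing the support grows while the correction decays geometrically. For the deterministic (expected-proportion) map started from an interval-supported initial clade, your route is more quantitative and arguably cleaner: under that map the support does grow by exactly one per iteration and the error is $O(\alpha^{g_{max}})$, a genuine convergence estimate rather than an analogy. What it covers less well: your expression for $Z_k$ (like \autoref{eq:z_f} itself) presumes the support is the full interval $\{0,\ldots,g_{max}\}$, so gapped supports --- precisely the case the paper's ergodicity paragraph targets --- are outside your argument; and in the finite-$n$ stochastic process $g_{max}$ is not monotone (it can drop if no child happens to draw a top-generation parent), so ``does increase'' holds only for the deterministic map or in an eventual, almost-sure sense. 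Both you and the paper leave the finite-population sampling fluctuations untreated; you at least flag that gap explicitly.
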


\begin{proof}
Assume for the moment that $g_k > 0$ for all $k$ in the existing clade, then from \autoref{eq:z_f} and the definition of $f_i$ we get
% \begin{eqnarray}
%   Z_k & = &  (1 - \alpha) \sum_{i=0}^k \alpha^i \nonumber \\
%       % & = &  (1 - \alpha) \left[ 
%       %             \sum_{i=0}^\infty \alpha^i  - \sum_{i=k+1}^\infty \alpha^i \right]  \nonumber \\
%       % & = & (1 - \alpha) \left[ 
%       %             \frac{1}{(1-\alpha)}  - \alpha^{k+1} \sum_{i=0}^\infty \alpha^i \right] \nonumber  \\
%       & = & 1  - \alpha^{k+1}. \label{eqn:Z_k}
% \end{eqnarray}
\begin{equation}
  Z_k  \; = \; (1 - \alpha) \sum_{i=0}^k \alpha^i 
      % & = &  (1 - \alpha) \left[ 
      %             \sum_{i=0}^\infty \alpha^i  - \sum_{i=k+1}^\infty \alpha^i \right]  \nonumber \\
      % & = & (1 - \alpha) \left[ 
      %             \frac{1}{(1-\alpha)}  - \alpha^{k+1} \sum_{i=0}^\infty \alpha^i \right] \nonumber  \\
      \; = \; 1  - \alpha^{k+1}. \label{eqn:Z_k}
\end{equation}Now \autoref{eqn:gen_0} and  \autoref{eqn:Z_k} lead to CDF
\begin{equation}
 \prob{ g(x) \leq k  } 
  =   e^{\lambda (Z_{k-1}-1)}
  =   e^{-\lambda \alpha^{k}}. \label{eqn:cdf}
\end{equation}
For $\alpha \in (0,1)$, this function is increasing, non-negative and
converges to 1, making it a valid CDF with PMF \autoref{eqn:pmf}.

The result refers to one iteration (from pre- to post-clade) distribution, assuming that $g_k > 0$ for all $k$, though for a given clade we may have $g_k=0$. The proof of convergence for these cases parallels that of ergodicity in a Markov chain in that we first note that all states are reachable, \ie we can obtain any generation number from any starting state through incremental increases and re-insertions of $\ol{0}$. Moreover, if all parents had odd generation in the pre-clade, then the post-clade would all have even generation. However, the converse is not true because of the insertions of $\ol{0}$. These insertions also remove the possibility of periodicity in the generation-number state. Hence, from any start state any other possible state can be reached, and the number of steps is not periodically constrained.  
\end{proof}

The convergence demonstrated here is a {\em weak} in the sense that we have only shown that a property of the ensemble converges. However, this is sufficient for our purposes, and we will examine other aspects of convergence empirically in later sections. 

This CDF does not appear to be a member of the standard zoo of discrete distributions. We refer to it here
as the {\em surreal distribution} $Su(\lambda, \alpha)$, which is a discrete, univariate, two-parameter distribution with parameters $\lambda > 0$  and $0 < \alpha < 1$. It has support on the non-negative integers, and PMF and CDF are given in \autoref{thm:surreal_dist}. 

\autoref{fig:_gen} shows both an empirical PMF and the predicted PMF from \autoref{eqn:pmf}, along with the geometric approximation described below. We see similar results for a wide range of other parameters. We can see this as retrospective support for our choice of $f_i$, because simple approximations show that the distribution has a geometric tail:
$\prob{g(x) = k} \simeq \lambda \big( 1 - \alpha \big) \alpha^{k-1},$
 as seen in the figure.

\begin{figure}[!t]
  \centering
    \includegraphics[width=\figurewidth]{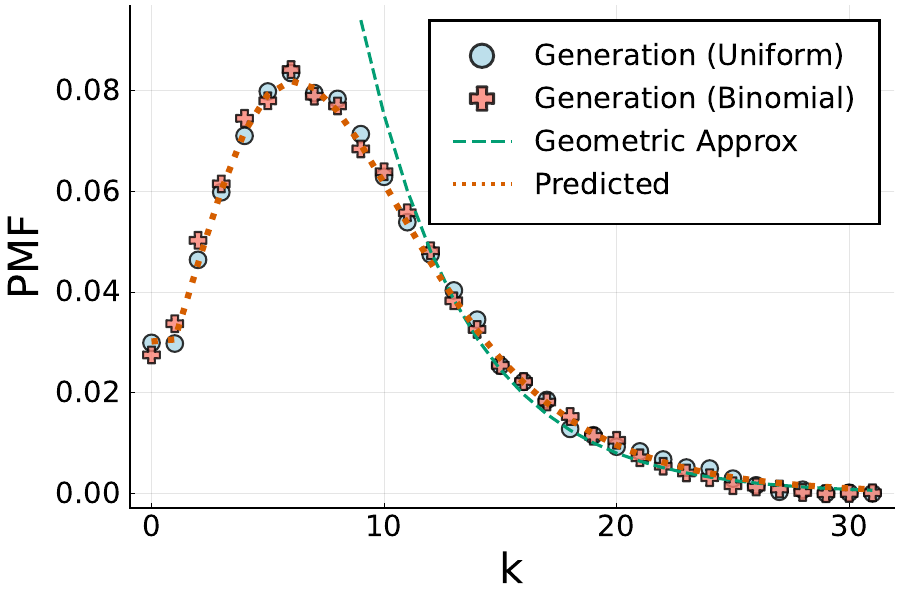}
  \caption{The predicted PMF of the generation distribution showing the
    empirical distributions (derived from 30
    simulations, iterated through 50 clades, with population size $n=500$ and $g^{(0)}_{max} = 1$, $\alpha=0.8, \lambda=3.5$) and the predicted surreal distribution. The geometric
    approximation is also shown (dashed line), and the empirical distributions are shown for both splitting functions (Uniform and Binomial), though there is no significant difference.}
    \label{fig:_gen}  
    \vspace*{-3mm}   
\end{figure} 

\section{Empirical Results}

As yet we have only considered the generation distribution. 
The underlying distribution of surreal forms may converge in a somewhat different manner than this distribution. That is, the population might reach equilibrium after the generation numbers appear to stabilise. We seek to further explore this in the following.

The empirical results shown here are generated by  {\em in silico} experiments. Unless stated, we use 50 iterations to generate a final population, and 30 instances of the process to generate statistics. 

\subsection{Convergence}

The first consideration is convergence. The goal is to examine convergence more generally by investigating other statistics. The surreal forms are DAGs so graph metrics, \eg the number of nodes and edges in the graphs. 

\autoref{fig:_clades} shows how these statistics converge starting from a small initial population of $\{ \ol{0},\ol{-1},\ol{1} \}$, \ie the case where $g^{(0)}_{max} = 1$. We have considered a large set of alternative parameters, and alternative views such as considering the maximum of these distributions, and these two sets are a reasonable representation of the types of behaviour observed. We selected 50 iterations in the following because it was sufficient iterations to see convergence in all examples considered. The population size used in the displayed results ($n=4000$) will be explained in more detail below. We see in the figure that
\begin{itemize}
  \item The statistics converge. Similar results are observed (not shown here) for other statistics, \eg higher order moments and statistics of the values or proportion of integers. 
  \item Convergence is faster for smaller $\lambda$ and $\alpha$, because there is more work to do to get from the small initial population to one with the larger average number of nodes.
  \item The generation distribution converges faster than the graph statistics, as shown by the vertical lines that show iteration at which the value first reaches 99\% of the eventual mean.
\end{itemize}

\begin{figure}[!t]
  \includegraphics[width=\figurewidth]{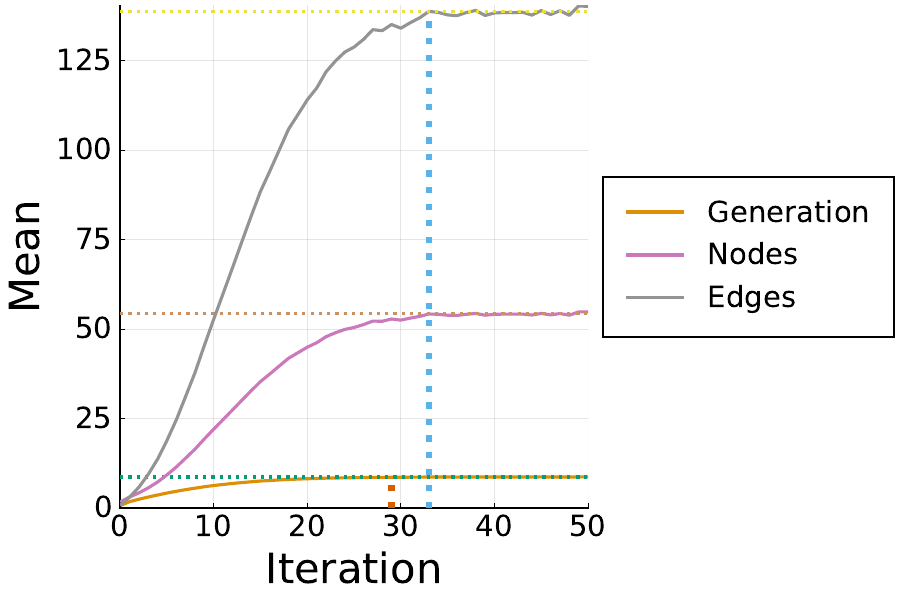}
  \caption{The mean of average generation, and number of nodes and edges for elements of a sequence of clades (for population size $n=4000$ and $g^{(0)}_{max} = 1$, $\alpha=0.8, \lambda=3.5$). Dotted vertical lines show the iteration at which the value first reaches 99\% of the eventual mean.}  
    \label{fig:_clades}  
    \vspace*{-3mm}   
\end{figure} 

Once we believe convergence is happening, the next question is how the parameters of the system affect convergence. We have examined the impact of the initial population, and it is largely inconsequential. It impacts the first few generations but is quickly washed away, as we might hope. 

We will examine the impact of the choice of split distribution in more detail below, but note that (as illustrated in \autoref{fig:_gen}) it has little impact on the convergence process. 

Thus, the main parameter of interest with regard to convergence is the population size $n$. We see little to no impact for smaller $\lambda$ and $\alpha$ values, but for larger values, we can see in \autoref{fig:_pop_size} that population size has a potentially surprising impact. 

The surprising detail is that it has little effect on convergence speed. If we analogise too closely with a GA, we would expect that a larger population might increase convergence at least marginally, but the analogy is flawed.  Convergence is not strongly affected by population size because convergence (here) is not about exploring a space to maximise a fitness function.

On the other hand, the final distribution is impacted. This effect is not easily observable in the generation distribution  \autoref{fig:_pop_size}~(A), but is obvious in the node-size distribution \autoref{fig:_pop_size}~(B). It is easily explainable, however. When $n$ is small, we simply cannot observe the larger, tail cases. Each surreal form in the distribution is built from other forms, so a minimum population size is needed to see the full potential variation. 

\begin{figure*}[!t]
  \centering
  \hfill
  \begin{subfigure}[b]{\figurewidth}
    \centering
    \includegraphics[width=\textwidth]{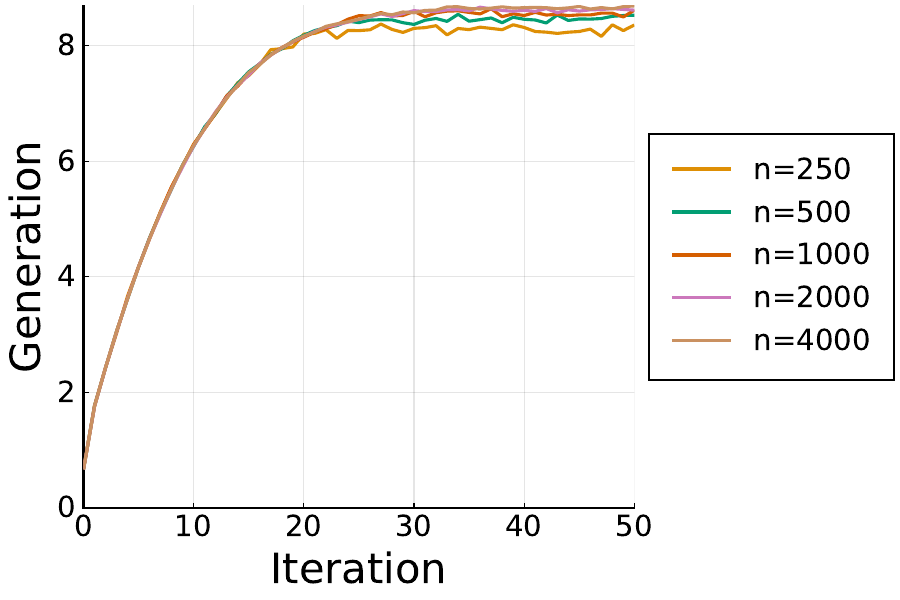}
    \caption{Generation.}
  \end{subfigure}
  \hfill
  \begin{subfigure}[b]{\figurewidth}
    \centering
    \includegraphics[width=\textwidth]{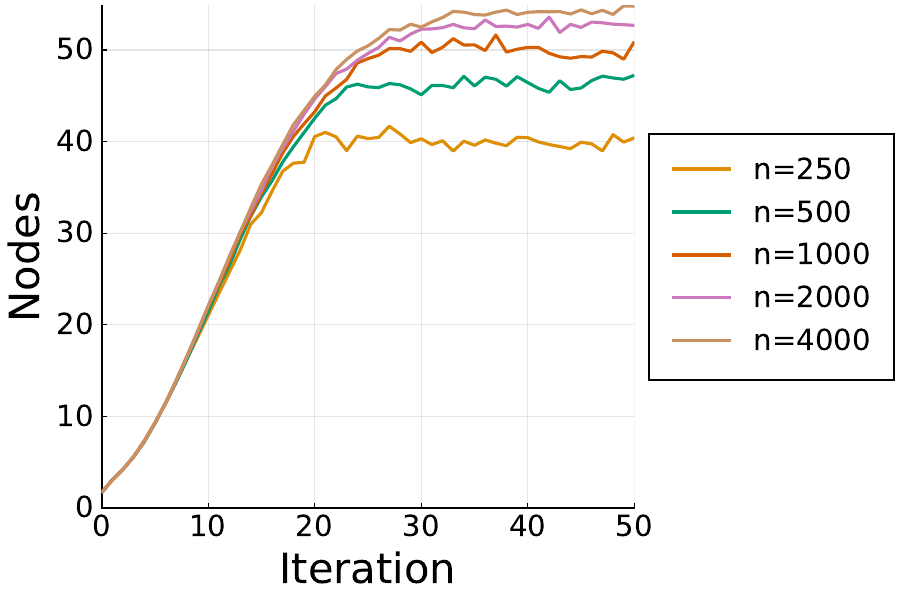}
    \caption{Nodes.}
  \end{subfigure}
  \hspace{2mm}
  \vspace*{-3mm}   
    \caption{Convergence WRT population size $n$ ($\alpha=0.8, \lambda=3.5$) showing node-size converges to different values.}
    \label{fig:_pop_size}   
\end{figure*} 
 
In principle, we should therefore work with an infinite population size but in practice, we find that most variation is seen by $n= 4000$, and the computational cost (which is linear in $n$) vs the marginal improvements in the distribution mean that $n=4000$ is a reasonable compromise. We use this value through the results here (unless specifically stated). 

An additional question is how the time-to-convergence is affected by the other key parameters $\lambda$ and $\alpha$. We measure convergence time by the time until variables such as average generation and node number converge to within 1\% of their final value (as measured from the data). \autoref{fig:converge-time}  shows the number of iterations until convergence. We note that although both $\alpha$ and $\lambda$ impact the convergence time, the impact of $\alpha$ is far larger. We can model the convergence time as 
\( t_{conv} \simeq 1.9 \exp( 3.3 \alpha ).
\)
 
\begin{figure*}[!t]
  \centering 
  \hfill
  \begin{subfigure}[b]{\figurewidth}
    \centering
    \includegraphics[width=\textwidth]{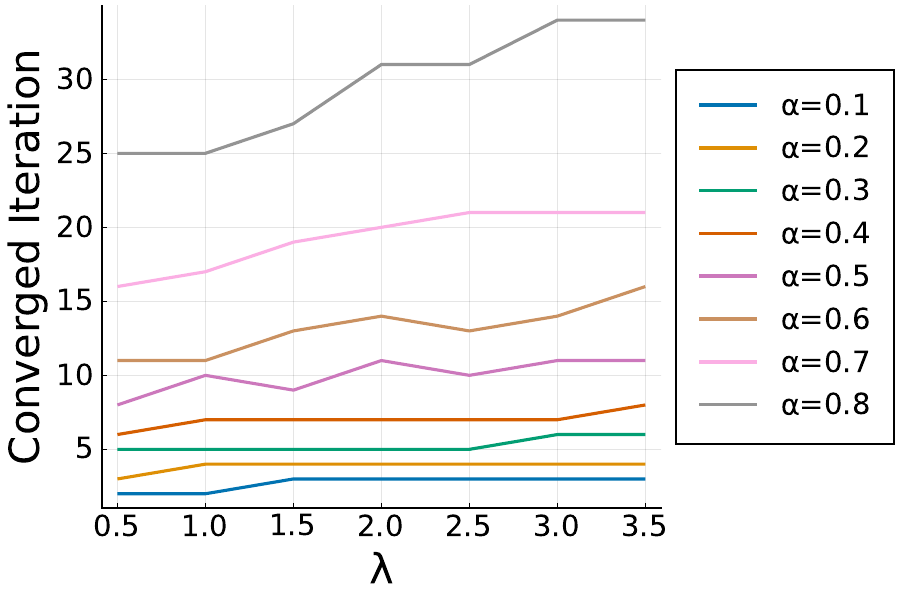}
    \caption{As a function of $\lambda$.}
  \end{subfigure}
  \hfill
  \begin{subfigure}[b]{\figurewidth}
    \centering
    \includegraphics[width=\textwidth]{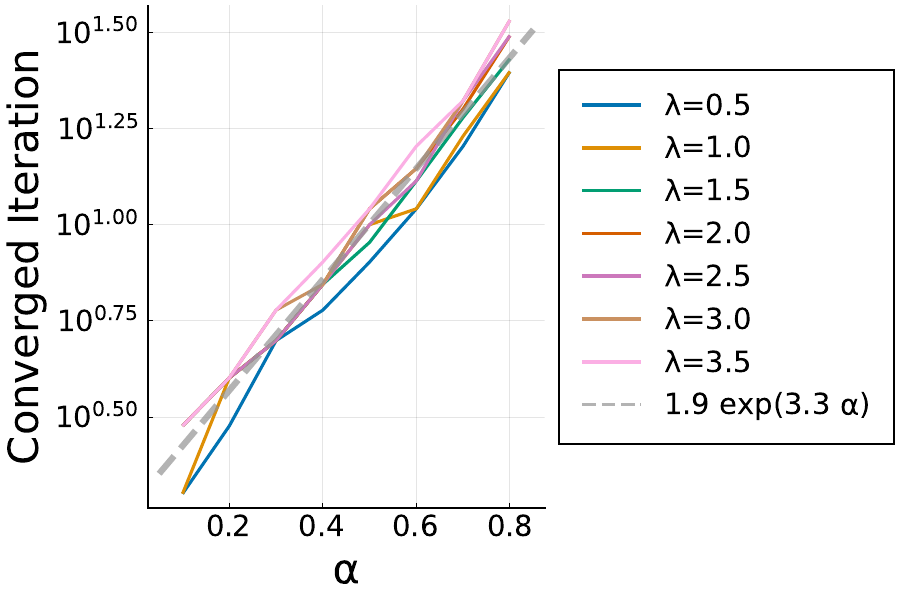}
    \caption{As a function of $\alpha$.}
  \end{subfigure}
  \hspace{2mm}
  \vspace*{-3mm}   
  \caption{Convergence time, \ie until variables converge to within 1\% of their final value ($n=4000$).}
    \label{fig:converge-time}  
    % \vspace*{-3mm}   
\end{figure*}

\subsection{Final ensemble structure}

The evidence supports that convergence is happening, so it is interesting to consider the distributions of the final clade. We already showed the generation distribution in \autoref{fig:_gen}, so we now examine the distributional properties of the DAGs representing the surreal forms. \autoref{fig:_nodes} shows the PMF of the number of nodes. The body is reminiscent of the Poisson distribution (though the details differ). Log-log CCDF plots confirm that the tail is not heavy -- it appears to be roughly geometric, as for the generation distribution, though with different parameters.

We observe similar distributions for a range of parameters and for the two splitting distributions. We also observe a similar pattern for the number of edges in the resulting DAGs. 

One additional concern is the nature of the relationship between variables such as generation, and numbers of nodes and edges. \autoref{fig:_scatter} shows the relationship between these variables for the final clade with $n=4000, m=50, g^{(0)}_{max}=1, \alpha=0.8, \lambda=3.5$ (we have tested and seen similar relationships for a much wider set of parameter choices). \autoref{fig:_scatter} shows one scatter point for each member of the ensemble. It also shows (as crosses) the mean number of nodes for the members of a generation and a quadratic fit (to the raw points, not to the means). Note that the quadratic fit almost exactly matches the mean values for all but the largest few generations. That deviation (for large generations) seems to occur in part because there are far fewer members in these generations, and in part because the population sample is limited ($n=4000$) and the impact of this is more keenly felt in the (large generation) tail. 

The quadratic relationship is explainable through the following argument: each surreal form is described by a DAG with exactly one source (the root, or a node with no parents, which here is the canonical zero $\ol{0}$), and one sink (a node with no children). For instance see \autoref{fig:x0}(B). So if we plot each node on the surreal form such that its height equals its generation number, we will see a graph that is narrow at the top and bottom, and wide in the middle. More precisely, if we start at the top of the DAG (at the sink), the graph will initially widen as we go down because each node has several parents. However, as we approach the lower reaches of the graph (near the source) there are limited options. There are a smaller number of potential parents with small birthdays, so at some point the graph starts to narrow, eventually to just one node. So in essence, for any given surreal form we observe a plot that might be somewhat diamond-shaped, with the height of the diamond determined by the generation of the surreal. The area of a diamond is quadratic in its height, and hence the relationship. Of course, this argument is rough, and there is a high degree of variability between individual surreal forms (we see that variability in \autoref{fig:_scatter}).

We see (plot not shown) an even stronger linear relationship between the number of nodes in a surreal form and the number of edges. This is hardly surprising as each node (except the last) is the parent of some other node in the graph. 
% The line $e = n-1$, which forms a lower bound on the number of edges because every node except $\ol{0}$ must have at least one parent is also plotted.
 
\begin{figure}[!thp]
  \centering
    \includegraphics[width=\figurewidth]{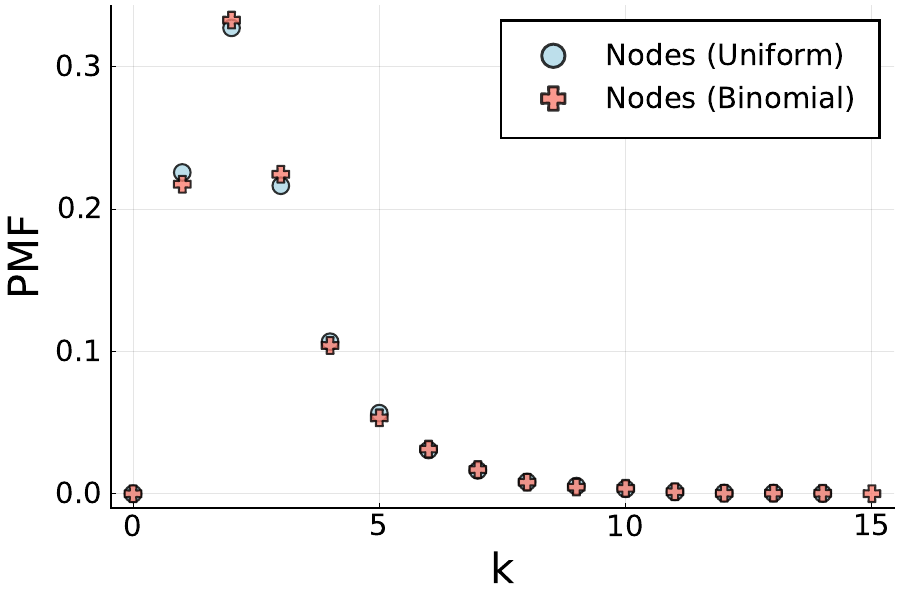}
  \vspace{-3mm}
  \caption{The distribution of the number of nodes in DAG graphs ($n=4000, m=50, g^{(0)}_{max} =1$, $\alpha=0.4, \lambda=1.5$).}
    \label{fig:_nodes}  
    \vspace*{-5mm}   
\end{figure}

\begin{figure}[!t]
  \centering
  \includegraphics[width=\figurewidth]{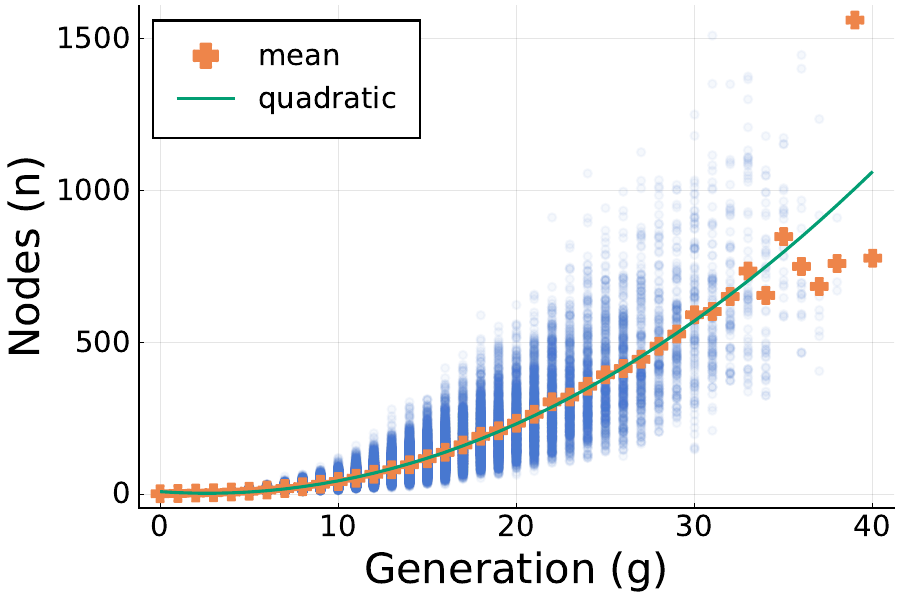} 

  \caption{The relationship between generation and numbers of nodes in the final clade ($n=4000, m=50, g^{(0)}_{max}=1, \alpha=0.8, \lambda=3.5$). Each point is one member of the set of generated clades and crosses show the mean for each generation. The solid line shows a fit (to the raw data not the means).
  %The line $e = n-1$, which forms a lower bound on the number of edges in a surreal is also plotted.
  }
    \label{fig:_scatter}  
    \vspace*{-3mm}   
\end{figure}

\section{The splitting function}

The previous aspects of the ensemble (generation, DAG size, convergence time) considered above are not strongly influenced by the splitting function. In this section, we consider aspects of the ensemble that are impacted strongly by the choice of splitting function. 

The most obvious impact of the splitting function is on the value of a surreal number. Conway's simplicity theorem~\cite{conway2000numbers}[p.23] shows that if there is a $v(x)$ that computes the value of a surreal form $x$, then for finite surreals
\begin{equation}
  \label{eqn:values}
  \max_{x_L \in X_L} v(x_L) < v(x) < \min_{x_R \in X_R} v(x_R).
\end{equation} 
Conway's theorem further implies that the value will be the one corresponding to the simplest possible surreal that satisfies \autoref{eqn:values}. The impact here is that if the split-point is more often in the middle of the parent list $P(x)$ then that will have a strong influence on the value of the resulting surreal.

The value of a surreal number is important for some algorithms. Here we consider the prevalence of integers in the ensemble because this is relevant for algorithms such as {\tt floor}. We can derive an approximation for the proportion of integer surreals that will be generated by \autoref{alg:main} as follows, making repeated use of Simons~\cite[p.11]{simons17:_meet_the_surreal_number} {\em Extra Option Theorem} so we repeat it here:

\begin{theorem}[Simon's Extra Option Theorem]
  \label{thm:extra_option}
  If $x == \{X_L \mid X_R \}$ is a surreal form, and $l$ and $r$ are surreal numbers such that $l < x < r$, then
      \( \{l, X_L \mid X_R \} \equiv x \equiv \{X_L \mid r, X_R \}.
      \)
\end{theorem}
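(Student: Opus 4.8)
The plan is to prove only the left-hand equivalence $\{l, X_L \mid X_R\} \equiv x$; the right-hand one follows by a symmetric argument, swapping the roles of left and right and reversing every inequality. The whole argument rests on the form of Conway's simplicity theorem recorded around \autoref{eqn:values}: the value $v(x)$ of a finite surreal form is the \emph{simplest} (oldest) surreal lying strictly in the open interval $\big(\max_{x_L \in X_L} v(x_L),\, \min_{x_R \in X_R} v(x_R)\big)$. I would use this characterization twice, once for $x$ and once for the augmented form, and then compare the two simplest elements.

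First I would confirm that $\{l, X_L \mid X_R\}$ is a legal surreal form. We are given $l < x$, and \autoref{eqn:values} gives $v(x) < v(x_R)$ for every $x_R \in X_R$, so $v(l) < v(x) < v(x_R)$; hence the new left option $l$ is not $\geq$ any right option. Together with the legality of $x$ itself, the augmented form satisfies the required separation condition. Next I would apply the simplicity theorem to the augmented form: its value $v'$ is the simplest surreal in the interval $\big(\max(v(l), \max_{x_L} v(x_L)),\, \min_{x_R} v(x_R)\big)$. Because $v(l) < v(x)$, the new lower bound is still strictly below $v(x)$, so this is a sub-interval $S$ of the original interval $T$ for $x$, and crucially $v(x)$ remains strictly interior to $S$. (If instead $v(l) \leq \max_{x_L} v(x_L)$ the lower bound is unchanged, $S = T$, and the conclusion is immediate.)

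The key step, and the one I expect to require the most care to state, is the following monotonicity observation: if $S \subseteq T$ and the simplest element $s^{*}$ of $T$ happens to lie in $S$, then $s^{*}$ is also the simplest element of $S$. This holds because $s^{*}$ is at least as old as every element of $T \supseteq S$, and it belongs to $S$. Applying this with $s^{*} = v(x)$, which is simplest in $T$ by the simplicity theorem and was shown above to lie in $S$, yields $v' = v(x)$, that is $\{l, X_L \mid X_R\} \equiv x$.

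The only genuine subtlety, and therefore the main obstacle to be careful about, is the interplay between the strict inequality $l < x$ and the shrinking of the interval: I must ensure the inequalities are strict so that $v(x)$ is truly interior to $S$ rather than sitting on its new lower boundary, since otherwise the monotonicity observation would not apply and the simplest interior value could change. Once that strictness is pinned down, the rest is bookkeeping, and the right-hand equivalence is obtained by the mirror-image argument.
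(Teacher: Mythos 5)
Your proposal is correct, but there is nothing in the paper to compare it against: the paper does not prove this statement at all. It is imported verbatim from Simons (cited as~\cite[p.11]{simons17:_meet_the_surreal_number}) precisely so that it can be \emph{used} in the proof of \autoref{thm:integers}; the paper treats it as an external black box. So you have supplied a proof where the paper supplies none.

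On its own merits, your argument is sound and is the natural ``modern'' route: both $x$ and the augmented form $\{l, X_L \mid X_R\}$ are legal (your legality check via $v(l) < v(x) < v(x_R)$ is right), the simplicity theorem assigns each of them the simplest value in its respective interval, the augmented interval $S$ is contained in the original interval $T$ with $v(x)$ still strictly interior because $l < x$ is strict, and your monotonicity observation then forces the two values to coincide. Two small points deserve explicit mention if you write this up. First, your monotonicity step silently requires that the simplest element of an interval is \emph{unique}; otherwise ``the simplest element of $S$'' is not well defined and you could only conclude that $v(x)$ is \emph{a} simplest element. Uniqueness is standard (between any two distinct surreals of equal birthday lies a strictly older one), and the paper's phrasing of the simplicity theorem --- ``the value will be \emph{the} one corresponding to the simplest possible surreal'' --- already presupposes it, so within this paper's framework the step is licensed, but a self-contained proof should say so. Second, you should handle the empty-set cases ($X_L = \phi $ or $X_R = \phi $) by the usual convention that the max over the empty set is $-\infty$ and the min is $+\infty$; nothing breaks, but \autoref{eqn:values} as written tacitly assumes both sets are nonempty. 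Note also that the more elementary proof (the one Simons essentially gives, and the one that works without invoking the simplicity theorem) verifies $\{l, X_L \mid X_R\} \leq x$ and $x \leq \{l, X_L \mid X_R\}$ directly from the recursive definition of $\leq$; that route is more self-contained but requires care with the mutual recursion, whereas yours trades that care for dependence on a stronger imported theorem.
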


\noindent We can use this to approximate the proportion of integers as follows:

\begin{theorem}
  \label{thm:integers}
  Any short surreal form with an empty left or right set will be an integer. Further, its value will be the integer of smallest magnitude that satisfies \autoref{eqn:values}.
\end{theorem}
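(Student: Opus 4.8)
The plan is to combine Conway's simplicity theorem (\autoref{eqn:values}) with the Extra Option Theorem (\autoref{thm:extra_option}) and reduce a general one-sided form to a canonical integer. By the reflection symmetry $x \mapsto -x$, which negates values and interchanges the negated left and right sets while preserving both ``being an integer'' and ``smallest magnitude,'' it suffices to treat forms with empty right set, $x == \{ X_L \mid \phi \}$; the empty-left case then follows by negating. The fully degenerate case $X_L = X_R = \phi$ is simply $\ol{0} \equiv 0$.

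So I would assume $x == \{ X_L \mid \phi \}$ and set $M = \max_{x_L \in X_L} v(x_L)$ (with the convention that no lower constraint is imposed when $X_L = \phi$). The left-hand inequality of \autoref{eqn:values} supplies the one fact I need from the simplicity theorem: $v(x) > M$, with no upper bound coming from the empty right set. Let $n_0$ be the integer of smallest magnitude with $n_0 > M$. I claim $x \equiv \ol{n_0}$, which is an integer and is precisely the value asserted by the theorem.

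First I would handle $M < 0$ (this includes $X_L = \phi$), where $n_0 = 0$. Here every left option satisfies $x_L < 0 = v(\ol{0})$, so applying \autoref{thm:extra_option} once for each element of $X_L$ — each time adjoining a left option strictly below the current value — turns $\{ \phi \mid \phi \}$ into $\{ X_L \mid \phi \}$ without changing the value, giving $x \equiv \ol{0}$. For $M \geq 0$ we have $n_0 = \lfloor M \rfloor + 1 \geq 1$, and I would use the canonical (Dali) recursion $\ol{n_0} == \{ \ol{n_0 - 1} \mid \phi \}$, whose value is $n_0$. Since every $x_L$ has $v(x_L) \leq M < n_0$, repeated application of \autoref{thm:extra_option} yields $\{ \ol{n_0-1},\, X_L \mid \phi \} \equiv \ol{n_0}$. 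Conversely, because $v(x) > M \geq n_0 - 1 = v(\ol{n_0-1})$, the option $\ol{n_0-1}$ lies strictly below $v(x)$, so \autoref{thm:extra_option} applied to $x$ itself gives $\{ \ol{n_0-1},\, X_L \mid \phi \} \equiv x$. Chaining the two equalities produces $x \equiv \ol{n_0}$.

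The main obstacle is arranging this last step without circularity: the ``removal'' of the extra canonical option $\ol{n_0-1}$ is really the Extra Option Theorem read in reverse, and it is legitimate only once we know $v(x) > n_0 - 1$, which is exactly why I extract the strict inequality $v(x) > M$ from the simplicity theorem \emph{before} pinning down $v(x)$ itself. A secondary point is the distinction between order and magnitude: when $M < -1$ the order-least integer exceeding $M$ (for instance $-2$ when $M = -2.5$) is not the magnitude-least one, which is $0$; the separate treatment of the whole regime $M < 0$, where $n_0 = 0$, is precisely what accounts for this. Finally I would observe that the $\ol{n_0}$ so produced is visibly the integer of least magnitude meeting \autoref{eqn:values}, which settles both assertions, and that the empty-left statement is recovered verbatim through the negation symmetry.
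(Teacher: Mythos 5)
Your proposal is correct and takes essentially the same route as the paper's proof: both reduce a one-sided short form to a canonical integer by repeated (forward) applications of the Extra Option Theorem, with the same case split on the sign of the largest left value and the same appeal to symmetry for the empty-left case. The differences are only tactical: the paper first collapses the left set to its maximum element $x_m$ and then chains $x \equiv x' \equiv x'' \equiv x'''$ with separate cases for $x_m$ integer and non-integer, whereas you meet in the middle via the single hybrid form $\{ \ol{n_0-1}, X_L \mid \phi \}$, shown equivalent to both $x$ and $\ol{n_0}$, which handles the integer and non-integer cases of $M$ uniformly.
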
 

\begin{proof}
  Assume a surreal $x$ with empty right set, and hence non-empty left set. % (otherwise the surreal is by definition $\ol{0}$). 
   Now take $x_m = \max X_L$ and create surreal $x' \defequals \{ x_m | \phi \}$, then by \autoref{thm:extra_option} $x' \equiv x$. 
  
  If $x_m < 0$, then the simplest surreal that satisfies \autoref{eqn:values} for $x'$ (and hence $x$) is $\ol{0}$, and hence the value of $x$ is the integer 0. 
  
  Also note that if $x_m (\geq 0)$ is an integer then by definition $x'$ is the canonical integer $x_m + 1$. 

  Hence, from now assume $x_m > 0$ is not an integer. Create a new surreal $x'' \defequals \big\{ x_m, \lfloor x_m \rfloor \big| \phi \big\}$, where we insert the largest integer smaller than $x_m$, \ie $\lfloor x_m \rfloor$ into the left set noting that by \autoref{thm:extra_option} we have $x'' \equiv x' \equiv x$. 

  We can further define surreal $x''' \defequals \big\{ \lfloor x_m \rfloor \big| \phi \big\}$ and note that by \autoref{thm:extra_option} and \autoref{eqn:values} we have $x''' \equiv x'' \equiv x' \equiv x$. Now $\lfloor x_m \rfloor$ is, by definition, an integer and hence $x'''$ is the canonical integer $\lfloor x_m \rfloor + 1$. Hence, $x$ is an integer. What's more, it is the smallest integer that is larger than $x_m$ and hence satisfies \autoref{eqn:values}.

  Empty left sets follow {\em mutatis mutandi} by symmetry.
\end{proof}

\autoref{thm:integers} implies that whenever the split point $s=0$ or $s=n_p$,
% (it is at the extreme left or right of the list of parents), 
then the left or right sets, respectively, will be empty, and hence the result is an integer. Hence, there is a lower bound on the probability of an integer for uniformly distributed split points of 
\[ \prob{x \in \Z | n_p } \geq 2 / (n_p+1), \]
for $n_p \geq 2$ (and it is 1 otherwise). 
% This is a lower bound because there are other (rarer) means by which integers arise. 

Given a Poisson number of parents, we obtain: 
\begin{eqnarray}
  \prob{x \in \Z } 
         & = & \sum_{n=0}^\infty \prob{x \in \Z | n_p } p(|P| = n_p) \nonumber \\
         & \geq & e^{-\lambda} + \sum_{n=1}^\infty \frac{2}{n+1} \frac{e^{-\lambda} \lambda^n}{n!} \nonumber \\
         % & = & - e^{-\lambda} + \frac{2}{\lambda} \sum_{n=0}^\infty \frac{e^{-\lambda} \lambda^{n+1}}{(n+1)!}  \nonumber \\
         % & = & -e^{-\lambda} +\frac{2}{\lambda} \sum_{n=1}^\infty \frac{e^{-\lambda} \lambda^{n}}{n!} \nonumber \\
         & = & \frac{2}{\lambda} \big( 1 - e^{-\lambda} \big) - e^{-\lambda}.
\end{eqnarray}

The bound is rough but helpful because it also points out a useful approach to reduce the number of integers --- we need a splitting function with a lower probability that  $s=0$ or $s=n_p$. Here we trial the Binomial distributions.

\autoref{fig:_denominator} shows the empirical results (note that dyadic rationals $n / 2^k$, have denominator-exponent $k$, which we show in the plot --- the integer surreal numbers are those for which the denominator-exponent is $k=0$).

\begin{figure}[!t]
    \includegraphics[width=\figurewidth]{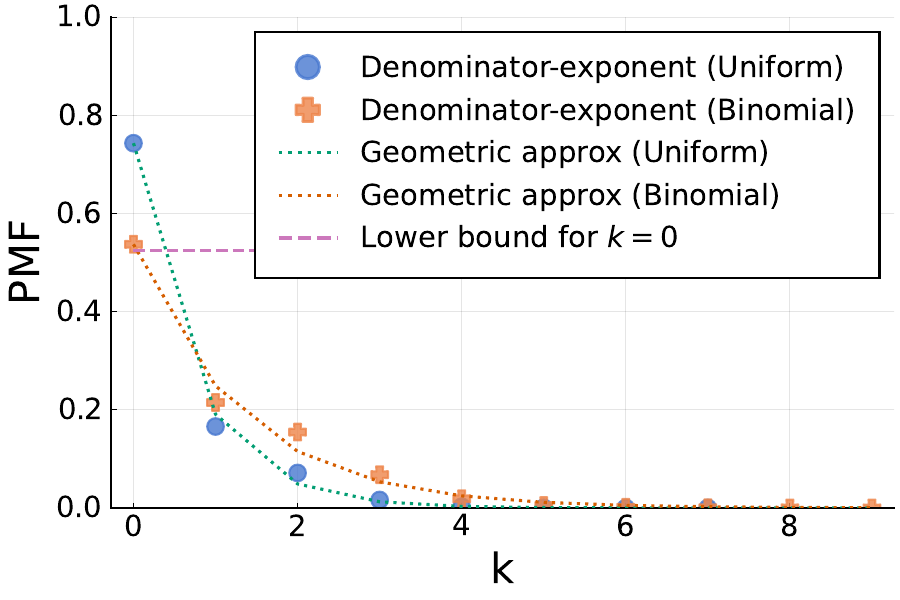}
  \caption{The denominator-exponent in the dyadic values $n/2^k$ (derived from 30 simulations, iterated through 50 clades, with population size $n=4000$ and $g^{(0)}_{max} = 1$, $\alpha=0.8, \lambda=3.5$). Note that the majority of surreals generated are integers, \ie $k=0$. However, this proportion decreases when the Binomial splitting function is preferred. Also shown (dashed line) is our lower bound integer estimate. We can see that the Binomial distribution spitting function approaches the bound for larger $\lambda$. Also shown (dotted) are geometric approximations to the distributions.}
    \label{fig:_denominator}  
    \vspace*{-3mm}    
\end{figure}

We can see a high proportion of integers, $k=0$, in both cases, but a reduction in the number of integers for the Binomial splitting function. The proportion is decreased to near the lower bound. Resampling could be used to further reduce the prevalence if needed.

\section{Conclusion and Future Work}

This paper presents an evolutionary algorithm that creates an ensemble of random surreal number forms with controlled complexity in order to create benchmark datasets. We are already using the results to test new algorithms, with results to be reported soon.

Apart from using the results in testing, there are a number of avenues for future investigation. For instance, there are a number of ways to generalize or extend the synthesis process, for instance by adapting additional ideas from the analogy of evolutionary algorithms. Moreover, we believe that there are likely to be many other areas where large, complex datasets are needed for benchmarking, and evolutionary computing can create such sets.

\vspace{-3mm} 
\section*{}  
{% \small
\bibliographystyle{cpclike} 
\bibliography{surreals,graph}  
}\par\leavevmode

\label{lastpage}
\end{document}